\newtheorem{prop}{Proposition} % for proposition
\definecolor{cGrey}{rgb}{0.9,0.9,0.9}
\definecolor{cGreen}{HTML}{548235}
\title{Latent Chain-of-Thought for Visual Reasoning }
\author{%
  \textbf{Guohao Sun\textsuperscript{1,2,}}\thanks{Part of this work was conducted while Guohao Sun and Hang Hua were interns at Snap Inc. \textsuperscript{+}Hang Hua and Jian Wang contributed equally. Corresponding authors: Guohao Sun (gs4288@rit.edu) and Zhiqiang Tao (zhiqiang.tao@rit.edu)},\;
  \textbf{Hang Hua\textsuperscript{2,3,+}},\;
  \textbf{Jian Wang\textsuperscript{2,+}},\;
  \textbf{Jiebo Luo\textsuperscript{3}},\;\\
  \textbf{Sohail Dianat\textsuperscript{1}},\;
  \textbf{Majid Rabbani\textsuperscript{1}},\;
  \textbf{Raghuveer Rao\textsuperscript{4}},\;
  \textbf{and Zhiqiang Tao\textsuperscript{1}}\\[4pt]
  \textsuperscript{1}Rochester Institute of Technology,
  \textsuperscript{2}Snap Inc.,\\
  \textsuperscript{3}University of Rochester, 
  \textsuperscript{4}DEVCOM Army Research Laboratory
}
\begin{document}
\definecolor{lavenderblue}{rgb}{0.8, 0.8, 1.0}
\definecolor{lavender}{rgb}{0.9, 0.9, 0.98}

\maketitle

\begin{abstract}
Chain-of-thought (CoT) reasoning is critical for improving the interpretability and reliability of Large Vision-Language Models (LVLMs). However, existing training algorithms such as SFT, PPO, and GRPO may not generalize well across unseen reasoning tasks and heavily rely on a biased reward model. To address this challenge, we reformulate reasoning in LVLMs as posterior inference and propose a scalable training algorithm based on amortized variational inference. By leveraging diversity-seeking reinforcement learning algorithms, we introduce a novel sparse reward function for token-level learning signals that encourage diverse, high-likelihood latent CoT, overcoming deterministic sampling limitations and avoiding reward hacking. Additionally, we implement a Bayesian inference-scaling strategy that replaces costly Best-of-N and Beam Search with a marginal likelihood to efficiently rank optimal rationales and answers. We empirically demonstrate that the proposed method enhances the state-of-the-art LVLMs on seven reasoning benchmarks, in terms of effectiveness, generalization, and interpretability. The code is available at \url{https://github.com/heliossun/LaCoT}.
\end{abstract}

\section{Introduction}

Chain-of-thought (CoT) reasoning is critical for enhancing the interpretability and reliability of Large Vision-Language Models (LVLMs) \cite{google2025gemini2,openai2023gpt4,chen2024internvl,Qwen2-VL,Hua2024FINECAPTIONCI,Hua2024V2XumLLMCV}. These models combine visual perception and natural language processing to perform intricate reasoning tasks that require explicit, step-by-step rationalization. As LVLMs have expanded into more sophisticated applications, such as visual question answering, commonsense reasoning, and complex task execution, the limitations of current training methods, such as generalization, have become increasingly evident.

To enable visual CoT, mainstream training paradigms, such as Supervised Fine-Tuning (SFT), Proximal Policy Optimization (PPO)~\cite{schulman2017proximal}, and Group Relative Policy Optimization (GRPO)~\cite{guo2025deepseek}, primarily focus on optimizing next-token distributions or scalar rewards. While effective for in-distribution tasks, these methods often struggle to generalize across diverse reasoning questions due to their inability to explicitly capture dependencies across trajectories~\cite{hu2024amortizing}. Specifically, SFT heavily depends on teacher‑forced log‑likelihood, only to parrot reference traces; meanwhile, PPO and GRPO are constrained in exploration as their KL penalties enforce proximity to the SFT baseline, making them fall short in finding novel rationales. Additionally, they may cause a reward hacking~\cite{skalse2022defining} issue that achieves high scores without genuinely solving the intended problem. 
To address these limitations, this work adopts a latent variable model to realize visual CoT as a probabilistic inference problem~\cite{Blei2016VariationalIA} over latent variables, allowing us to work with rich, expressive probabilistic models that better capture uncertainty and hidden structure, without needing direct supervision. 
%intractable posterior inference

Unlike using prompting and in-context learning to generate deterministic reasoning CoT ($Z$), we treat $Z$ as a latent variable sampled from a posterior $P(Z| X, Y)=P(XZY)/\sum\nolimits_{Z'}^{}P(XZ'Y)$, given a question-answer pair $(X, Y)$ as observation. However, such sampling is intractable due to the normalization term. Existing methods to sample approximately from an intractable posterior include Markov chain Monte Carlo (MCMC) and RL approaches such as PPO~\cite{schulman2017proximal}. Despite good training efficiency, these methods show limited capacity in modeling the full diversity of the distribution~\cite{hu2024amortizing}. By contrast, Amortized Variational Inference (AVI) \cite{Zhang2017AdvancesIV,Konstantinov2023NeuralAF,liu2021cluster,liu2022deep} yields token‑level learning through optimizing the Evidence Lower Bound (ELBO), which encourages diverse trajectories and provides a principled way to draw samples from the target posterior distribution (see Fig.~\ref{fig:framework}). One way to implement AVI is given by the generative flow networks (GFlowNets~\cite{bengio2021flow,Bengio2021GFlowNetF}) algorithm: training a neural network to approximate a distribution of interest. Despite achieving strong performance in broad text reasoning tasks, prior GFlowNets-based approaches~\cite{hu2024amortizing} have yet to fully address visual reasoning due to the long CoT sequence inherent in multimodal tasks(e.g., $\sim1k$ tokens).

In this study, we propose a novel reasoning model, namely \textbf{LaCoT}, which enables amortized latent CoT sampling in LVLMs and generalizes across various visual reasoning tasks. To achieve this, we propose \ding{182} a general RL training algorithm (RGFN) with a novel reference-guided policy exploration method to overcome the catastrophic forgetting issue and eliminate the diversity constraint caused by the KL penalty. To improve exploration efficiency, we introduce \ding{183} a token‑level reward approximation method, allowing efficient mini-batch exploration for diverse sampling. Finally, we introduce \ding{184} a Bayesian inference-scaling strategy (BiN) for optimal rationale-solution searching at inference time for any reasoning LVLM. Previous works have provided empirical evidence that Best-of-N (BoN) sampling~\cite{10.5555/3495724.3495977}, Beam Search~\cite{10.5555/2969033.2969173}, and other heuristic-driven approaches~\cite{xu2024llavacot} can improve model's performance at inference time. However, these methods are computationally costly and rely heavily on biased critic models, failing to provide an optimal reasoning chain or answer efficiently. Our inference procedure is grounded in Bayesian sampling principles to eliminate the critic model and improve interpretability. We treat rationales as integration variables and rank answers by a principled, length‑normalized marginal likelihood. Consequently, our method delivers a scalable, probabilistically justified searching strategy, effectively identifying optimal rationales and answers within LVLMs.

Empirically, we develop the proposed LaCoT on two base models, Qwen2.5-VL~\cite{Bai2025Qwen25VLTR} 3B and 7B, where the 7B model achieves an improvement of $6.6\%$ over its base model and outperforms GRPO by $10.6\%$. The 3B model surpasses its base model with $13.9\%$ and achieves better results than larger models, e.g., LLaVA-CoT-11B and LLaVA-OV-7B, demonstrating the effectiveness of learning to sample latent CoT on reasoning benchmarks.

\section{Preliminaries}
%\subsection{GFlowNets notation and background }
Generative Flow Networks (GFlowNets) \cite{Bengio2021GFlowNetF,lahlou2023theory,bengio2021flow,zhang2022generative} are a class of amortized variational inference methods designed to sample complex, structured objects such as sequences and graphs with probabilities proportional to a predefined, unnormalized reward function. Unlike traditional generative models that often focus on maximizing likelihood or expected reward, GFlowNets objective, such as Sub-Trajectory Balance (subTB)~\cite{Madan2022LearningGF}, is a hierarchical variational objective~\cite{malkin2023gflownets}. Such that if the model is capable of expressing any action distribution and the objective function is globally minimized, then the flow consistency for trajectory $\tau=(z_i\rightarrow \cdots z_j)$ is 
\begin{equation}\label{eq:flow_consis}
   F(z_i)\prod_{k=i+1}^{j} P_F(z_k \mid z_{k-1})=F(z_j)\prod_{k=i+1}^{j} P_B(z_{k-1} \mid z_k)
\end{equation}
by minimizing a statistical divergence between the learned and and the target distributions over trajectories $D_{KL}(P_B||P_F)$, where $F(z_i)$ is the in flow at state $z_i$, $P_F(z_{i}| z_{i-1})$ and $P_B(z_{i-1}| z_{i})$ indicates the forward and backward policy that predicts the probability between states.

In the case of causal LLM, token sequences are autoregressively generated one-by-one from left to right, so there is only one path to each state $z_{i}$, and each state has only one parent $z_{i-1}$. Given this condition, $P_B(-| -)=1$ for all states. By modeling $P_F(-| -)$ with $q_\theta(-| -)$, parameterized by $\theta$, the loss function aims to ensure consistency between the flow assigned to all trajectories from one complete rationale (i.e., $Z=(z_1z_2\cdots z_n \top)=z_{1:n}\top$). Specifically, for trajectory truncated by a paired index $(i,j)$ with $0 \leq i < j \leq n$, the loss penalizes discrepancies between the flow at state $z_i$, scaled by the product of transition probabilities from $z_{i+1}$ to $z_j$ and the flow at $z_j$:
\begin{equation}\label{eq:subtb}
\begin{aligned}
\mathcal{L}_{\text{SubTB}}(Z; \theta) 
&= \sum_{0 \leq i < j \leq n} 
\left[ 
\log \frac{F(z_i)\prod_{k=i+1}^{j} q_\theta(z_k \mid z_{1:k-1})}{F(z_j)}
\right]^2 \\
&= \sum_{0 \leq i < j \leq n}
\left[
\log \frac{R(z_{1:i}\top) \prod_{k=i+1}^{j} q_\theta(z_k \mid z_{1:k-1}) q_\theta(\top \mid z_{1:j})}{R(z_{1:j}\top)\,q_\theta(\top \mid z_{1:i})}
\right]^2\,,
\end{aligned}
\end{equation}
where $F(z_i)=R(z_{1:i})=\frac{R(z_{1:i}\top)}{q_\theta(\top| z_{1:i})}$ when $z_i$ is the the final state, $R(z_{1:i}\top)$ is the reward of trajectory ends at $z_i$, where $\top$ represents the terminal state, which is usually an $ \left< eos\right>$ token in LLM. 

\begin{figure}[t]
  \centering
    \includegraphics[width=0.9\textwidth]{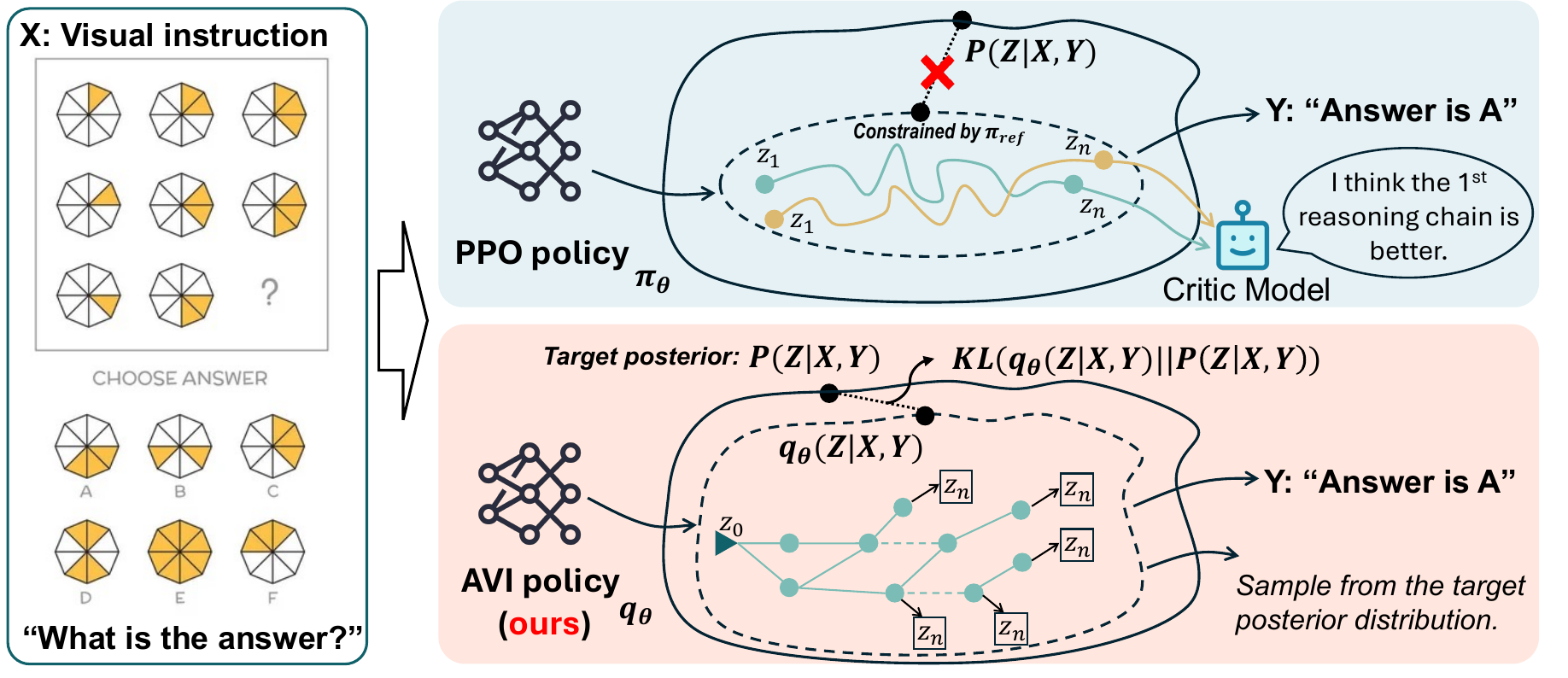}
  \caption{\label{fig:framework} Comparison of different training algorithms for visual reasoning. PPO implicitly approximates the rationale distribution but tends to under‑represent its full diversity due to limited exploration constrained by its reference policy (e.g., the SFT model), and it heavily relies on a critic (reward) model. In contrast, AVI explicitly estimates the true target posterior $P(Z|X,Y)$ through latent rationales, which promote diverse trajectories and inherently prevent reward hacking.}
\end{figure}
\section{Amortizing Variational Inference for Latent Visual CoT}

By leveraging GFlowNets for AVI in LVLM, we formulate visual reasoning as a variational inference problem, as shown in Fig.~\ref{fig:framework}. That is, given a question-answer pair $(X,Y)$ as an observation, the goal is to find the latent visual CoT sequences $Z$ that contribute the most to the conditional likelihood:
\begin{equation}\label{eq:posterior}
    P(Y|X)=\sum_{Z\sim P(Z \mid X,Y)}^{}P(ZY|X),
\end{equation}
where $P(ZY|X)$ denotes the likelihood assigned to a concatenated sequence (e.g., $ZY$) given visual instruction $X$, and $Z$ is a latent CoT supposed to be sampled from a posterior distribution $P(Z | X,Y)$. To approximate such a posterior, we use GLowNets objective derived in Eq.~\eqref{eq:subtb} -- an amortized variational inference method -- to train an autoregressive model $q_\theta(Z | X)$. By minimizing Eq.~\eqref{eq:subtb}, the policy model learns to generate trajectories where the probability of generating a particular trajectory is proportional to its reward (i.e., unnormalized posterior probability), ensuring that higher-likelihood rationales (as determined by $R$) are more likely. 

However, \ding{182} Eq.~\eqref{eq:subtb} requires token-level reward, which is infeasible in complex reasoning chains with thousands of tokens. \ding{183} Efficient and diverse exploration remains a challenging research problem in reinforcement learning, especially when an environment contains large state spaces. Given these research problems, we provide our solutions in the following sections. 

%%%%%%%%%%%%%%%%%%%%%%%%%%%%%%%%%%%%%%%%%%%%%%%%%%%%%%%%%%%%
\subsection{Token-level Marginal Reward Approximation}
The proposed amortized rationale sampler $q_\theta(Z | X)$ shares the same generation process as in autoregressive LVLM: given a prefix condition $X$, and at the $i$-th step, a token $z_i$ is sampled from a policy model $q_\theta(z_i|X,z_{1:i-1})$, which is then appended to the sequence. Consistent sampling autoregressively from the LVLM until a terminal state $ \top $ is reached gives us one completion of rationale $Z=(z_1z_2\cdots z_n \top)$. As shown in Eq.\eqref{eq:subtb}, the objective function incorporates state-level rewards, enabling the model to correctly attribute the contribution of each step to the final reward. By setting the reward $R(z_{1:t}\top)=\log P(Xz_{1:t}\top Y) \propto P(z_{1:t}\top | X,Y)$, we optimize the policy model to sample all trajectories such as $\tau=z_{1:t}\top$ from the target distribution at convergence. 

\begin{wrapfigure}{r}{0.5\textwidth}\vspace{0.5cm}
\vspace{-2em}
  \centering
    \includegraphics[width=\linewidth]{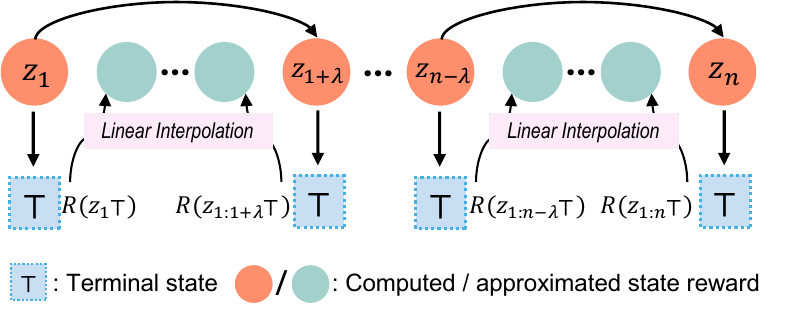}
    \vspace{-1em}
  \caption{\label{fig:skipreward}Within a complete rationale sequence, we compute the actual reward after each $\lambda$ steps and adopt a linear interpolation strategy to estimate the intermediate steps. }\vspace{-1em}
\end{wrapfigure}

By treating each token as a state, such a training algorithm provides clear guidance for the policy on how early actions impact the final outcome, helping reduce variance and improving convergence~\cite{Madan2022LearningGF}. However, directly computing the exact reward for all states is computationally expensive during training, especially for a long rationale sequence. A natural approximation is to assume local smoothness of reward within small regions.
To efficiently estimate intermediate rewards, we adopt a linear interpolation strategy within segmented regions of length $\lambda$ as shown in Fig.~\ref{fig:skipreward}. 

The following proposition summarizes our theoretical claim for improving the training efficiency of Eq.~\eqref{eq:subtb}. This approximation leverages the local smoothness of the log-likelihood, significantly reducing computational overhead without substantial loss in accuracy. We empirically evaluate the effectiveness of our claim in the experimental section.

\begin{prop}\label{prop:1}
    Let $R(z_{1:t}\top)=\log P(Xz_{1:t}Y)$ be a joint-likelihood reward function.
    
    (a)~If $R(z_{1:-})$ and $R(z_{1:-+\lambda})$ are true reward and the intermediate rewards within region of length $\lambda$ are constantly increment, then we can approximate the reward at step $t+i$ (where $0 \leq i \leq \lambda$) as
    \begin{equation}\label{eq:reward}
    \tilde{R}(z_{1:t+i}\top) = R(z_{1:t}\top) + \frac{i}{\lambda}\left(R(z_{1:t+\lambda}\top) - R(z_{1:t}\top)\right).
    \end{equation}
    (b)~If $\lambda$ is short enough, the interpolation reward error stays close to 0 and the flow between $F(z_{1:-})$ and $F(z_{1:-+\lambda})$ satisfies Eq.~\eqref{eq:flow_consis}.
\end{prop}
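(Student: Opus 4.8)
The plan is to treat the two parts asymmetrically: part (a) is an exact algebraic identity under its own hypothesis, while all of the genuine analytic content lives in part (b).

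For part (a), I would unpack the ``constantly increment'' assumption literally: within the segment of length $\lambda$ the per-step increase of the reward is a fixed constant $\delta$, i.e. $R(z_{1:t+k+1}\top)-R(z_{1:t+k}\top)=\delta$ for every $0\le k<\lambda$. Telescoping these increments across the whole segment gives $R(z_{1:t+\lambda}\top)-R(z_{1:t}\top)=\lambda\delta$, hence $\delta=\tfrac{1}{\lambda}\bigl(R(z_{1:t+\lambda}\top)-R(z_{1:t}\top)\bigr)$, and summing the first $i$ increments yields $R(z_{1:t+i}\top)=R(z_{1:t}\top)+i\delta$, which is exactly Eq.~\eqref{eq:reward}. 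So under the idealized arithmetic-progression hypothesis the interpolant is not merely an approximation but coincides with the true reward; this is a one-line computation once the assumption is stated.

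For part (b), I would view $f(s):=R(z_{1:t+s}\top)$ as a function of the offset $s\in[0,\lambda]$ and observe that $\tilde R$ is precisely the secant of $f$ through the exactly-evaluated endpoints $s=0$ and $s=\lambda$. Writing $\varepsilon_i:=\tilde R(z_{1:t+i}\top)-f(i)$, I would invoke the standard secant-interpolation remainder $|\varepsilon_i|\le \tfrac{1}{8}\lambda^2\sup_{s\in[0,\lambda]}|f''(s)|$ (or, in the genuinely discrete token setting, a bound in terms of the maximal second difference of $f$). The stated ``local smoothness of the log-likelihood'' is exactly what bounds this curvature by a constant $C$, giving $|\varepsilon_i|\le C\lambda^2/8\to 0$ as $\lambda\to 0$, which settles the first claim. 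To get the flow-consistency claim I would convert this additive error on the log-reward into a multiplicative error on the flow: since the flow is the exponential transform of the reward, $F(z_{1:t+i})=\exp\!\bigl(R(z_{1:t+i}\top)-\log q_\theta(\top\mid z_{1:t+i})\bigr)$, the interpolated flow differs from the true flow by a factor $\exp(\varepsilon_i)=1+O(\lambda^2)$. At a global minimizer of the SubTB objective the forward transitions satisfy $\prod_{k=i+1}^{j}q_\theta(z_k\mid z_{1:k-1})=F(z_j)/F(z_i)$ with respect to whichever rewards enter the loss; because the endpoints carry the true reward, the relation Eq.~\eqref{eq:flow_consis} holds exactly at the segment boundaries and holds for intermediate states up to the vanishing factor $\exp(\varepsilon_i)$. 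Recalling $P_B\equiv 1$ for the autoregressive LVLM collapses the right-hand side of Eq.~\eqref{eq:flow_consis} to $F(z_j)$, so establishing this telescoped forward-probability identity is all that remains.

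The hard part will be justifying the smoothness hypothesis itself: the reward lives on a discrete token index rather than a continuous variable, so ``$f''$'' must be reinterpreted as a bound on second differences, and arguing that these second differences are genuinely small inside a short window is a modeling assumption about the LVLM's cumulative log-likelihood rather than a theorem one can discharge unconditionally. A secondary delicate point is propagating the per-step error cleanly through the product in Eq.~\eqref{eq:flow_consis}, so that the accumulated multiplicative error over the $\lambda$ intermediate steps still vanishes with $\lambda$; I would control this by keeping $\lambda$ small enough that the total accumulated deviation remains $O(\lambda^2)$ rather than degrading to $O(\lambda)$.
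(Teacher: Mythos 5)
Your proposal is correct and follows essentially the same route as the paper's proof: part (a) is the same one-line observation (your telescoping of constant increments is just a rewording of the paper's ``assume the reward grows linearly and substitute''), and part (b) uses the identical two ingredients --- the classical linear-interpolation remainder bound $|\varepsilon_i|\le \tfrac{1}{8}\lambda^{2}\,\sup|f''|$ (the paper states it with a bounded discrete second difference $M$, exactly as you anticipate in your ``hard part'' remark), followed by propagation of the per-state error through Eq.~\eqref{eq:flow_consis} as a multiplicative $1+\mathcal{O}(\lambda^{2})$ factor accumulated over at most $j-i\le\lambda$ steps.

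The one substantive point where you diverge is how the additive reward error becomes a multiplicative flow error. The paper keeps the flow as the ratio $F(z_s)=R(z_{1:s}\top)/q_\theta(\top\mid z_{1:s})$ with $R$ the \emph{log}-likelihood, writes $\tilde F(z_s)=F(z_s)\bigl(1+\varepsilon_s/R(z_{1:s}\top)\bigr)$, and therefore needs $R_{\min}:=\min_s R(z_{1:s}\top)>0$, which it justifies by saying ``positivity follows from likelihoods.'' That justification is shaky: $R=\log P(\cdot)\le 0$ for a likelihood, so what is really needed is $|R|$ bounded away from zero, not positivity. Your version instead exponentiates, $F=\exp\bigl(R-\log q_\theta(\top\mid\cdot)\bigr)$, so the perturbation enters as a clean factor $\exp(\varepsilon_i)=1+\mathcal{O}(\lambda^{2})$ with no lower bound on $R$ required at all; this is both closer to the standard GFlowNet convention (flows proportional to unnormalized probabilities, hence positive) and avoids the paper's weakest step. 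The trade-off is that your flow is not literally the quantity $F=R/q_\theta$ defined in the paper's preliminaries, so to splice your argument into the paper you would either have to redefine the flow or note that the two definitions differ by a monotone transform that preserves the consistency statement up to the same vanishing factor. Your closing concern about error accumulation is also handled correctly by both arguments: with $j-i\le\lambda$ per-step relative errors of size $\mathcal{O}(\lambda^{2})$, the accumulated factor is $\bigl(1+\mathcal{O}(\lambda^{2})\bigr)^{\lambda}=1+\mathcal{O}(\lambda^{3})$, which still vanishes, so no degradation to $\mathcal{O}(\lambda)$ occurs.
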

\begin{proof}See Appendix~\ref{appen:proof}.
\end{proof}
Substituting the estimated reward $\tilde{R}$ in Eq.~\eqref{eq:subtb} gives our modified interpolated sub-trajectory balance ($\mathcal{L}_{\text{ISubTB}}$) loss:
\begin{equation}\label{eq:interp-reward}
    \mathcal{L}_{\text{ISubTB}}(Z; \theta) 
= \sum_{0 \leq i < j \leq n}
\left[
\log \frac{\tilde{R}(z_{1:i}\top) \prod_{k=i+1}^{j} q_\theta(z_k \mid z_{1:k-1}) q_\theta(\top \mid z_{1:j})}{\tilde{R}(z_{1:j}\top)\,q_\theta(\top \mid z_{1:i})}
\right]^2\,,
\end{equation}
where $\tilde{R}(z_{1:i}\top)$ is defined pice-wise as:
\begin{equation}
  \tilde{R}\!\bigl(z_{1:i}\top\bigr)=
  \begin{cases}
    R\!\bigl(z_{1:i}\top\bigr) &
      \text{if } i \text{ is the index of actual reward},\\[6pt]
    R\!\bigl(z_{1:t}\top\bigr) +
      \dfrac{i-t}{\lambda}\!\left(
        R\!\bigl(z_{1:t+\lambda}\top\bigr) -
        R\!\bigl(z_{1:t}\top\bigr)
      \right) &
      \text{if } t < i < t+\lambda\text{ (estimated) }.
  \end{cases} \notag
\end{equation}
By computing the sparse rewards and efficiently approximating the intermediate states' rewards, we can easily apply mini-batch exploration for diverse sampling to improve the generalizability of $q_\theta(Z|X)$ by covering the full target posterior.  

\subsection{Reference-Guided GFlowNet Fine-tuning}
\begin{wrapfigure}{R}{0.6\textwidth}
  \vspace{-1em}
    \includegraphics[width=\linewidth]{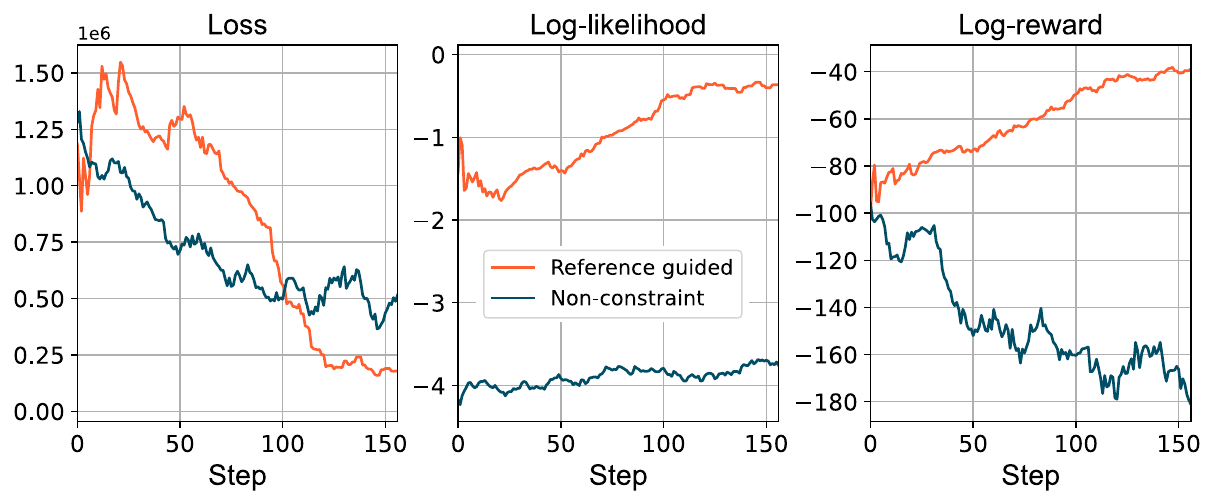}
    \vspace{-1.5em}
  \caption{\label{fig:motivation1} Allowing the policy model to explore the state space without constraint causes the catastrophic forgetting issue. The proposed reference-guided exploration effectively addresses this problem.}\vspace{-1.5em}
\end{wrapfigure}
Previous works~\cite{10.5555/3327546.3327710,10.5555/3305890.3305918} suggest that exploration can let policy gradient methods collect unbiased gradient samples, escape deceptive local optima, and produce policies that generalize better.
However, as shown in Fig.~\ref{fig:motivation1}, allowing the model to explore without constraint causes the catastrophic forgetting issue, where the model tends to generate meaningless content with high likelihood but low reward. Existing methods, such as KL penalty~\cite{pmlr-v37-schulman15} and clipped surrogate objective~\cite{schulman2017proximal}, control the size of the gradient update. If the resulting policy is too far from the previous policy, the KL penalty constrains it to take an overly aggressive learning step. However, such a method limits the exploration and increases the variance of trajectories~\cite{pmlr-v115-wang20b}. To address this issue, we propose a simple but effective solution by integrating a reference-based mechanism to guide the exploration process towards generating higher-quality rationales.

During training, we first explore $m$ candidate latent rationales $\{Z_1, Z_2, \ldots, Z_m\}$ from the current policy model $q_\theta(Z|X)$ and compare them against a reference rationale $Z_{\text{ref}}$ that anchors the search in a data‑grounded region. Before gradient descent, each candidate $Z_i$ is associated with a reward $R(Z_i) = \log P(XZ_i Y)$, and the ones that underperform the reference rationale are discarded before they reach the gradient, preventing collapse into a meaningless but high‑probability trajectory. To achieve candidate filtering, we define an indicator function:
\begin{equation}\label{eq:refz}
    \mathbb{I}(Z_i) = 
\begin{cases}
1, & \text{if } R(Z_i) > \delta_{s} R(Z_{\text{ref}}) \\
0, & \text{otherwise}
\end{cases}
\end{equation}
where $\delta_s=\tau_{max}-(\tau_{max}-\tau_{min})*min(1,s/50)$ is the annealing coefficient, $s$ is the index of the current training step. By doing this, we tolerate more exploration at the beginning and gradually increase its standard. The acceptance bar tightens only after 50 steps, allowing the model to explore first and then exploit later. Furthermore, by filtering out low‑reward trajectories, we back‑prop only through ``better‑than‑reference'' samples, which reduces gradient variance without hand‑tuning the gradient clip or KL penalty.

 By incorporating the reference-based mechanism into Eq.~\eqref{eq:interp-reward}, our final objective function is denoted as Reference-Guided GFlowNet fine-tuning (RGFN):
\begin{equation}\label{eq:loss}
    \mathcal{L}_{\text{RGFN}}(Z_i;\theta) =  \sum_{i=1}^{m} \mathbb{I}(Z_i) \cdot \mathcal{L}_{\text{ISubTB}}(Z_i; \theta)\,.
\end{equation}
\subsection{Bayesian Inference over Latent Rationales}
\begin{wrapfigure}{R}{0.25\textwidth}
  \vspace{-3em}
    \includegraphics[width=\linewidth]{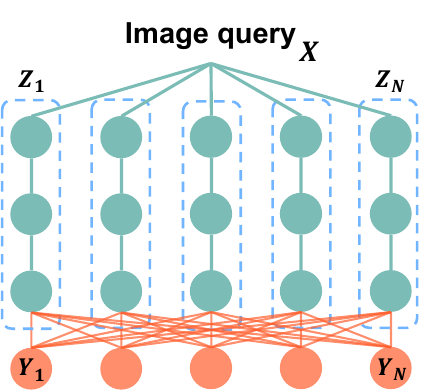}
    \vspace{-1em}
  \caption{\label{fig:BiN}Inference pipeline of BiN.  }\vspace{-2em}
\end{wrapfigure}
Inference-scaling method such as Best-of-N (BoN) generates multiple candidate responses and select the best one based on a verifier are widely used in reasoning LVLM~\cite{xu2024llavacot,wei2024mc}. 
However, BoN has significant computational overhead~\cite{Kang2025ScalableBS}, high dependency on reward model quality~\cite{amini2025variational}, and scalability challenges~\cite{Pan2025LearningAP}. To address these limitations, this work introduces a probabilistic method, namely \textbf{B}ayesian \textbf{i}nference over \textbf{N} latent rationales (\textbf{BiN}). Our approach is inspired by recent advancements in amortized variational inference for hierarchical models \citep{Agrawal2021AmortizedVI}, where shared parameters represent local distributions, facilitating scalable inference.

Given input $X$ and a target answer $Y$, we can sample latent rationales $Z$ from a posterior $P(Z|X,Y)$ that bridges $X$ and $Y$, forming a joint sequence $XZY$. The joint likelihood is denoted as $P(XZY)$, and the marginal likelihood of $Y$ given $X$ is expressed as
\begin{equation}
P(Y \mid X) = \sum_{Z\sim P(Z \mid X,Y)} P(Z Y \mid X) = \sum_{Z\sim P(Z \mid X,Y)} P(Z \mid X) \cdot P(Y \mid X Z)\,.
\end{equation}
However, it is infeasible to sample all latent rationales from the $P(Z | X, Y)$. Therefore, we employ the policy model $q_\theta(Z | X)$ trained via Eq~\eqref{eq:loss} to approximate the marginal likelihood. Fig.~\ref{fig:BiN} shows the complete inference pipeline where we perform the following steps:
(i) Sample $N$ latent rationales $\{Z_i\}_{i=1}^N$ from the learned policy model: $Z_i \sim q_\theta(Z | X)$.
(ii) For each sampled rationale $Z_i$, we sample the corresponding answer $Y^{(i)}$ from $\pi_\Phi(Y_i |XZ_i)$, where $\pi_\Phi$ is a reasoning LVLM.
(iii) Compute the joint likelihood for all pairs $(Z_i Y_i)$: $\pi_\Phi(Z_i Y_i| X)$.
(iv) Estimate the marginal likelihood by normalizing over sequence length $\left | Z_iY_i\right |$ as
\begin{equation}\label{eq:bsample}
    P(Y_i \mid X)\sim\frac{1}{N}\sum_{j=1}^{N}\frac{1}{\left | Z_iY_i\right |}\pi_\Phi(Z_iY_i\mid X) .
\end{equation}
(v) Select the answer $Y_{i^*}$ with the highest estimated marginal likelihood: $i^* = \arg\max_{i} P(Y_i | X)$ as the final output. This inference strategy aligns with Bayesian sampling principles by approximating the marginal likelihood $P(Y | X)$ through sampling over latent rationales. The use of amortized variational inference for $q_\theta(Z| X)$ enables efficient sampling without the need for computationally intensive methods like Markov Chain Monte Carlo (MCMC). By selecting the answer with the highest estimated marginal likelihood, we aim to improve the interoperability of answer selection.

%%%%%%%%%%%%%%%%%%%%%%%%%%%%%%%%%%%%%%%%%%%%%%%%%%%%%%%%%%%%

\section{Empirical results}

\subsection{Implementation}
\noindent\textbf{Reward model.}
~This work utilizes a fine-tuned reasoning LVLM denoted as $\pi_\Phi$ parameterized by $\Phi$ as the reward model $R$. Efficiently, $\pi_\Phi$ also acts as the starting point of the proposed rationale sampler. The purpose of our reward model is to evaluate the quality of rationales sampled from the policy model (rationale sampler). To make sure that the reward function returns a higher reward for better rationale, we first optimize $\pi_\Phi$ by maximizing the likelihood of high-quality, structured examples of rationales (SFT), such as chain-of-thought (CoT) sequences. By learning from these examples, the model gains an initial understanding of how to approach complex tasks methodically. For training $\pi_\Phi$, we consider two pre-trained LVLMs as the base models, including Qwen2.5-VL-3B\& 7B~\cite{Bai2025Qwen25VLTR} and a mixture of visual reasoning datasets from LLaVA-CoT~\cite{xu2024llavacot} and R1-Onevision~\cite{Yang2025R1OnevisionAG}. As shown in Fig.~\ref{fig:dataformat}, we formulate the instructional data with a new special token \textbf{\emph{Analyzer}}. We fully fine-tune $\pi_\Phi$ using the regular token prediction loss for one epoch. 

\noindent\textbf{Rationale sampler.}
~To sample the latent rationale $Z$ from the posterior defined in Eq.\eqref{eq:posterior}, we parameterize the policy model as an autoregressive model $q_\theta(Z| X)$, initialized with $\pi_\Phi$. For training, we optimize the model using \emph{LoRA} with $r=64$ and $alpha=128$. We resample $3k$ visual reasoning sample from the SFT data, where each consists of (image, query, CoT, and answer). To be noted, we use the CoTs generated by teacher models, such as GPT-4o or Deepseek-R1, as our reference rationale $Z_{ref}$ in Eq.~\eqref{eq:refz}. For the reward approximation defined in Eq.~\ref{eq:reward}, we set $\lambda=8$ for all the experiments. Please refer to Appendix~\ref{appen:stdlambda} for the study of $\lambda$. More hyperparameter settings can be found in Appendix~\ref{sec:hyperpmtr}.

\begin{figure}[t]
  \begin{center}
    \includegraphics[width=0.95\textwidth]{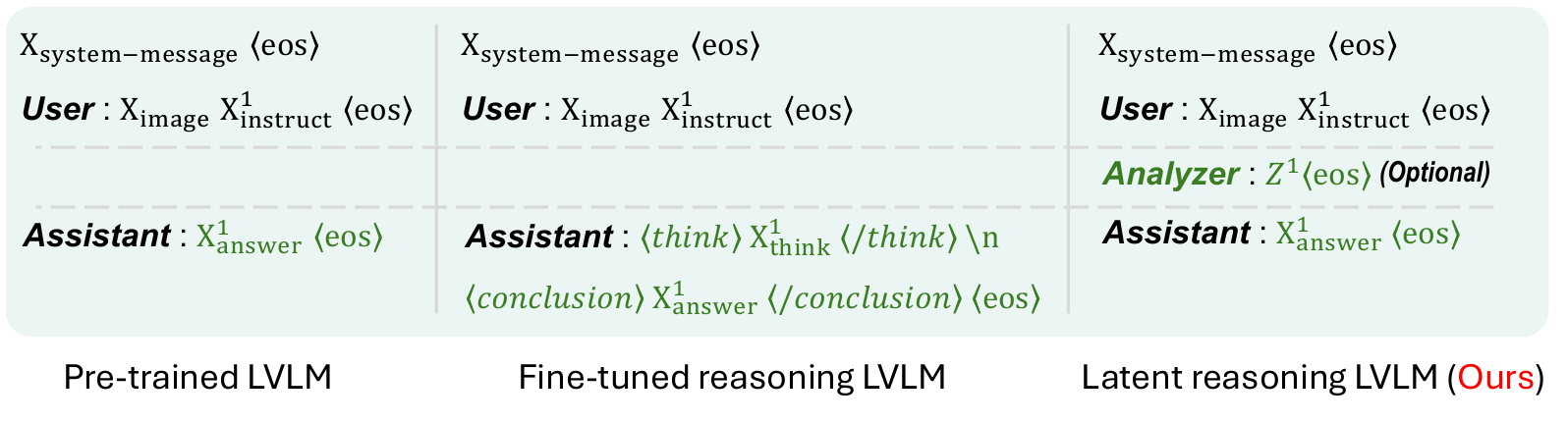}
  \end{center}\vspace{-0.5em}
  \caption{\label{fig:dataformat}Input sequence of training a reasoning LVLM. We use {\color{cGreen}token} to represent learnable parts. Specifically, the fine-tuned reasoning LVLM heavily relies on annotated data during optimization, and the object tokens followed by \textbf{\emph{Assistant}} enforce reasoning for all instructions. We introduce a new role token \textbf{\emph{Analyzer}}, so the model can selectively provide reasoning steps.}\vspace{-1em}
\end{figure}

\begin{table}[t]
\centering
\caption{\label{tab:mainresult}Test accuracy (\%) on visual reasoning benchmarks. $ \dagger $ are results based on our reproduced experiments. The best results are \textbf{bold}, and the second-best results are \underline{underlined}. We choose the reasoning models fine-tuned with SFT and GRPO (R1-Onevision) as baselines. All the base models were prompted by a \emph{step-by-step reasoning} instruction. } 
\setlength{\tabcolsep}{3.5pt}
\scalebox{0.95}{\begin{tabular}{lccccccc}
\toprule
\multirow{2}{*}{Method} & MathVista & MathVision & MathVerse & MMMU & MMMU-pro&MMVet&MME \\
 & mini & full & vision-only & val &vision&test&test  \\\midrule
GPT-4o & 60.0 & 30.4 & 40.6 & 70.7 &51.9&69.1&2329 \\
Gemini-1.5-Pro & 63.9 & 19.2 & - & 65.8&46.9 &64.0&2111  \\
Claude-3.5-Sonnet & 67.7 & - & 46.3 & 68.3 &51.5 &66.0&1920 \\\midrule
InternVL2-4B~\cite{chen2024internvl} & 58.6 & 16.5 & \underline{32.0} & \underline{47.9} & -&55.7&2046 \\
Qwen2.5-VL-3B$^\dagger$~\cite{Bai2025Qwen25VLTR} & \underline{60.3} & \textbf{21.2} & 26.1 & 46.6 & \underline{22.4}&\underline{61.4}&\underline{2134} \\
\rowcolor{cGrey}LaCoT-Qwen-3B & \textbf{63.2} & \underline{20.7} & \textbf{40.0} & \textbf{48.8} & \textbf{28.9}&\textbf{69.6}&\textbf{2208} \\ \midrule
LLaVA-CoT-11B~\cite{xu2024llavacot} & 52.5 & - & 22.6 & - & -&64.9&-\\
LLaVA-OV-7B$^\dagger$~\cite{Li2024LLaVAOneVisionEV} & 63.2 & 11.1 & 26.2 & 48.8 & 24.1&57.5&1998 \\
%InternVL2.5-8B & 64.4 & 19.7& 39.5& \textbf{56.0} &44.9  \\
MiniCPM-V2.6~\cite{Yao2024MiniCPMVAG} & 60.6 & 17.5& 25.7& 49.8 & 27.2&60.0&\underline{2348} \\
InternVL2-8B~\cite{chen2024internvl} & 58.3 & 18.4 & 37.0 & \underline{52.6} & 25.4&60.0&2210 \\
Qwen2.5-VL-7B$^\dagger$~\cite{Bai2025Qwen25VLTR} & 63.7 & \textbf{25.4} & \underline{38.2} & 50.0 & \underline{34.6}&70.5&2333 \\
R1-Onevision$^\dagger$~\cite{Yang2025R1OnevisionAG}& \underline{64.1}&23.9&37.8&47.9&28.2&\underline{71.1}&1111\\
\rowcolor{cGrey}LaCoT-Qwen-7B & \textbf{68.4} &\underline{24.9}  &\textbf{43.3}  & \textbf{54.9} & \textbf{35.3}&\textbf{74.2}&\textbf{2372} \\ 
% LLaVA-OV-0.5B$^\dagger$ & \textbf{34.8} & 6.3 & 17.6 & \textbf{31.4} & 22.5 \\
% \rowcolor{cGrey}LaCoT-LLaVA-0.5B & 33.7 & \textbf{11.4} & \textbf{19.5} & 30.4 & \textbf{23.8} \\
 \bottomrule
\end{tabular}}
\end{table}

\subsection{Multi-modal Reasoning}
\noindent\textbf{Task description.}
~Multi-modal reasoning evaluates the visual understanding and reasoning ability of LVLM as it requires step-by-step thinking and correct answer searching. This work proposes a reasoning LVLM, i.e., \textbf{LaCoT}, which consists of a latent rationale sampler $q_\theta$ and an answering model $\pi_\Phi$. Specifically, at test time, we randomly sample $m$ latent rationales $Z$ for an unseen $X$ with temperature $\tau$ from $q_\theta(Z | X)$, then the answer model samples $m$ answers from $\pi_\Phi(Y|XZ)$. Finally, we estimate the marginal likelihood of each answer $P(Y|X)$ using the proposed BiN and return the highest one as the final output, as shown in Eq.~\eqref{eq:bsample}.

\noindent\textbf{Benchmarks.}
~This work utilizes three mathematical and one general domain reasoning benchmarks: (i) MathVista~\cite{Lu2023MathVistaEM}: a math benchmark designed to combine challenges from diverse mathematical and visual tasks, requiring fine-grained visual understanding and compositional reasoning. (ii) MathVision~\cite{wang2024measuring}: a meticulously curated collection of 3,040 high-quality mathematical problems with visual contexts sourced from real math competitions. (iii) MathVerse~\cite{Zhang2024MathVerseDY}: an all-around visual math benchmark designed for an equitable and in-depth evaluation of LVLMs. We report the Vision-Only result on 788 questions, which reveals a significant challenge in rendering the entire question within the diagram. (vi) MMMU~\cite{Yue2023MMMUAM}: a benchmark designed to evaluate LVLM on massive multi-discipline tasks demanding college-level subject knowledge and deliberate reasoning. Furthermore, we conduct additional experiments on MMMU-pro~\cite{Yue2024MMMUProAM}, MMVet~\cite{mmvet}, and MME~\cite{Fu2023MMEAC}, where MMMU-Pro is a more robust version of MMMU, designed to assess LVLMs' understanding and reasoning capabilities more rigorously. 

\begin{wrapfigure}{R}{0.45\textwidth}
\vspace{-1em}
    \includegraphics[width=\linewidth]{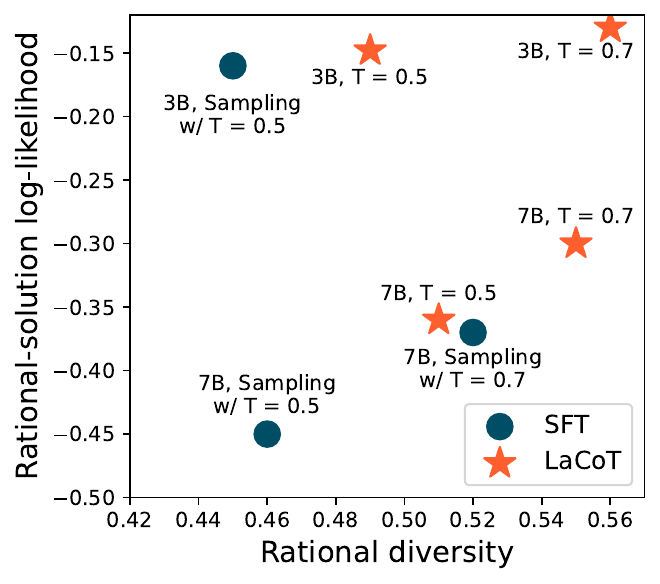}\vspace{-0.5em}
  \caption{\label{fig:diversity}Maximum log-likelihood and diversity of the sampled rationale. LaCoT model ($\star$) samples higher log-likelihood rationale while maintaining higher rationale diversity than SFT ($\bullet$) model.}\vspace{-1em}
\end{wrapfigure}
\noindent\textbf{Results.}
This work provides two LaCoT models (3B and 7B). From the results summarized in Table~\ref{tab:mainresult}, our models are the best open‑source LVLM and narrow the gap to GPT‑4o to less than $3$ points while using only 7 billion parameters. The consistent improvements on MathVista and MMMU show that LaCoT strengthens general multi-modal reasoning. MathVerse-Vision‑only improves the most, especially at 3B, where accuracy jumps 14 points and outperforms all 7B models. This advancement indicates that LaCoT significantly boosts diagram comprehension and OCR robustness. On the other hand, MathVision consists of real Olympiad diagrams, which are more varied, and often handwritten or low‑resolution, conditions that push OCR and visual grounding beyond. Many problems split critical information between text and picture (e.g., tiny angle labels or subtle curve annotations), so a single misread propagates through the longer, proof‑style reasoning chains, leading to a performance drop.

The LaCoT model can sample rationales with higher diversity than baseline models, increasing the probability of sampling answers with higher likelihood. To validate this hypothesis, we sample 5 rationale candidates with random temperature (T) for each visual instruction. To measure the semantic diversity of the samples, we compute the average inter-sentence similarity between the candidate and the reference set. As shown in Fig.~\ref{fig:diversity}, rationales generated by LaCoT-Qwen-3B with $T=0.7$ have the highest log-likelihood and diversity. Qualitative results can be seen in Fig.~\ref{fig:vis1} and the supplementary.

\subsection{Inference-time Scaling}

\begin{wraptable}{R}{0.5\textwidth}
\vspace{-1em}
\centering
\caption{\label{tab:inftime-scal} Comparison between two inference-time scaling methods using LaCoT-Qwen (3B/7B).}\vspace{-0.5em}
\setlength{\tabcolsep}{3pt}
\scalebox{0.78}{\begin{tabular}{lcccc}
\toprule
Method & MathVerse & MathVista & MMMU & MMVet \\ \midrule
3B w/ BoN & 21.2 & 57.1 & 44.7 & 67.1 \\
\rowcolor{cGrey}3B w/ BiN (ours) & \textbf{40.0} & \textbf{63.2} & \textbf{48.8} & \textbf{69.6} \\ \midrule
7B w/ BoN & 26.5 & 62.2 & 47.3 & 71.2 \\
\rowcolor{cGrey}7B w/ BiN (ours) & \textbf{39.7} & \textbf{68.4} & \textbf{54.9} & \textbf{74.2} \\ \bottomrule
\end{tabular}}\vspace{-1em}
\end{wraptable}
We compare BiN (ours) with Best-of-N (BoN) using LaCoT-Qwen as the shared policy model. At inference, we sample $N$ rationale–answer pairs, compute a length-normalized log-likelihood of each answer as the reward, and for BoN select the answer with the highest reward. To ensure fairness, no external reward model is used. We evaluate $N\in\{5,10\}$ for both methods and report the best score per method. As shown in Table~\ref{tab:inftime-scal}, BiN consistently outperforms BoN on visual reasoning benchmarks.

\subsection{Ablation Studies}
\noindent\textbf{Effectiveness of RGFN.}
As baselines, we consider zero-shot prompting w/o reasoning, supervised fine-tuning on the visual reasoning dataset, and  GRPO~\cite{Shao2024DeepSeekMathPT} fine-tuning.

\begin{wraptable}{R}{0.45\textwidth}
\vspace{-1em}
\centering
\caption{\label{tab:abl_train}Test accuracy (\%) on reasoning benchmarks using Qwen2.5-VL-7B model.}
\setlength{\tabcolsep}{3pt}
\scalebox{0.95}{\begin{tabular}{lccc}
\toprule
Method                 & MathVista  & MathVerse & MMMU \\ \midrule
Zero-shot    & 63.7            & 38.2         & 50.0 \\\midrule
SFT & 62.7             & 38.7        & 50.6 \\
GRPO  & 62.6            &     36.8    & 47.9     \\ 
\rowcolor{cGrey}RGFN    &  \textbf{68.4}                   &    \textbf{43.3}          &  \textbf{54.9}    \\ \bottomrule
\end{tabular}}\vspace{-1em}
\end{wraptable}

 From the results summarized in Table~\ref{tab:abl_train}, the base model performs well without chain-of-thought reasoning. While supervised fine-tuning on reasoning data slightly improves performance on two benchmarks, it still struggles to generalize to challenging visual reasoning tasks. Fine-tuning with GRPO yields poor performance, partly due to inadequate guidance of the external reward model, i.e., it cannot distinguish good rationales from bad ones, and limited exploration due to the KL penalty. 
Such misleading optimization due to misaligned reward is a widely noted issue in RL-based algorithms~\cite{Zhou2024CalibratedSV} for LVLM. On the other hand, by matching the target distribution, RGFN avoids collapsing to a single mode of the reward, and the reference-guided exploration covers diverse trajectories, leading to better performance on complex examples. 
%TODO: add qualititative results.

\begin{wrapfigure}{R}{0.6\textwidth}\vspace{-1em}
    \centering
    \includegraphics[width=\linewidth]{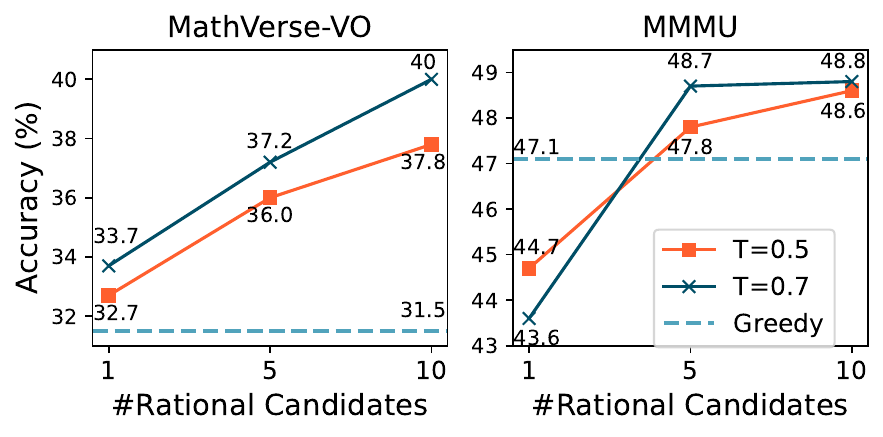}
    \vspace{-1em}
  \caption{\label{fig:BI}Test accuracy on reasoning benchmarks using LaCoT-Qwen-3B. We evaluate the impact of \#rationale candidates ($N$) and random temperature (T).}\vspace{-1em}
\end{wrapfigure}
\noindent\textbf{Study of BiN.}
To evaluate the scalability, we apply the proposed inference-scaling method by varying the number of candidates $N$ and temperature $T$ using LaCoT-Qwen-3B on the reasoning benchmarks. 
As illustrated in Fig.~\ref{fig:BI}, test accuracy consistently increases with higher $N$, and higher $T$. This indicates that increasing $N$ systematically improves test‑time accuracy because it (i) reduces the Monte‑Carlo variance of the marginal‑likelihood estimator—standard error scales as $\mathcal{O}(1/\sqrt{N})$—thereby stabilizing answer rankings; (ii) offers broader posterior coverage, mitigating mode‑dropping bias inherent in the amortized sampling $q_\theta(Z|X)$; (iii) smooths fluctuations introduced by length‑normalization, yielding more reliable re‑weighting; and (iv) enlarges the candidate answer set, elevating the chance that the correct output is observed. Together, these effects drive an exponential decay in the probability of selecting an incorrect answer.

Furthermore, higher $N$ can effectively address the hallucination issue in visual reasoning. As shown in Fig.~\ref{fig:BI}, when the sampled rational size $N=1$, BiN may produce incorrect or misleading reasoning steps and lead to lower answer accuracy on the MMMU dataset. However, increasing $N$ from 1 to 5 significantly mitigates hallucination and improves answer accuracy. We provide qualitative results in Appendix~\ref{appen:qualitative}.

\begin{wraptable}{R}{0.45\textwidth}
\vspace{-1em}
%\begin{table}[h]
\centering
\caption{\label{tab:BI}Test accuracy of Qwen2.5-VL supervised fine-tuning on reasoning data.}\vspace{-0.5em}
\setlength{\tabcolsep}{3pt}
\scalebox{0.95}{\begin{tabular}{lccc}
\toprule
Method                 & MathVista  & MathVerse & MMMU \\ \midrule
7B (SFT)& 62.7&38.7&50.6\\
~~+ BiN& \textbf{64.4}&\textbf{38.9}&\textbf{51.6}\\ \midrule
3B (SFT)& 58.7&33.3&43.1\\
~~+ BiN& \textbf{59.4}&\textbf{35.2}&\textbf{45.0}\\
\bottomrule
\end{tabular}}
\end{wraptable}

To evaluate the generalizability of BiN, we evaluate the performance of Qwen2.5-VL 3B \& 7B (SFT) with the proposed inference-scaling method. We set $N=5$ and $T=0.7$, which gives the most performance boost with a relatively shorter inference time. As shown in Table~\ref{tab:BI}, BiN consistently improves the model performance on all benchmarks, indicating the effectiveness of BiN as a general inference-scaling method for reasoning LVLMs.

\section{Related Work}

\textbf{Learning-based Multimodal CoT (MCoT)} methods have emerged as a powerful paradigm for enhancing the reasoning capabilities of LVLMs~\cite{Sun2024SQLLaVASF,sun-etal-2024-self}. Unlike prompt-based or plan-based approaches, learning-based MCoT explicitly embeds the entire reasoning trajectory into the models through supervised learning on rationale-augmented datasets. Early studies such as Multimodal-CoT \cite{zhang2023multicot} pioneered this direction by fine-tuning LVLMs to generate visual CoT, facilitating a structured reasoning process aligned with human cognitive patterns. From that, methods like MC-CoT \cite{wei2024mc} further refined this approach by incorporating multimodal consistency constraints and majority voting mechanisms during training. In addition, methods such as PCoT \cite{wang2024t} and G-CoT \cite{ma2024dolphins} demonstrated that explicitly training LVLMs with structured rationales improves the interpretability and generalizability. These advancements underscore the effectiveness and necessity of embedding structured, rationale-driven reasoning capabilities directly into multimodal models.

%Multimodal-CoT, LLaVA-CoT, and G-CoT supervise fine-tune (SFT) the pre-trained LVLM with reasoning data containing rationales that are generated by stronger models (e.g., ChatGPT, GPT-4o). However, SFT for reasoning has few limitations: 1. Suboptimal reasoning: SFT optimizes the model to mimic human or teacher model's demonstrations, which may not always be optimal or deeply reasoned. 2. Limited generalization, reasoning often requires chaining novel ideas together. SFT doesn’t teach the model how to evaluate or adapt its reasoning process—it just copies what it has seen. To solve this issue, this project will discovers RL-based methods. 

\textbf{Reinforcement Learning-based Language Models} have demonstrated significant effectiveness in advancing the reasoning capabilities of LLMs. DeepSeek-R1 \cite{guo2025deepseek} exemplifies this by activating long-chain-of-thought (long-CoT) reasoning solely through reinforcement learning (RL), achieving improvements over models such as GPT-o1 \cite{jaech2024openai} in specific aspects when combined with supervised fine-tuning (SFT) cold starts and iterative self-improvement. This success has spurred further interest in RL-driven models, including Open-R1 \cite{openr1} and TinyZero \cite{tinyzero}. To enhance reasoning, generalization, and ensure training stability, several RL algorithms have been developed, such as PPO \cite{schulman2017proximal}, GRPO \cite{guo2025deepseek}, and simplified methods like RLHF \cite{ziegler1909fine}, DPO \cite{rafailov2023direct}, and SPO \cite{Sun_2025_ICCV}. Nevertheless, these approaches are heavily dependent on high-quality human-annotated data (e.g., human preference labels and scalar rewards) and typically produce policies with limited diversity. To address these limitations, this work proposes an RL algorithm specifically designed to train LVLMs using amortized variational inference, which is capable of generating diverse outputs and supporting probabilistic inference-time scaling.

\textbf{Inference-time Scaling methods} aim to enhance reasoning performance during inference by leveraging high-quality prompts and effective sampling strategies. Plan-based approaches, exemplified by MM-ToT \cite{gomez2023multimodaltot} and LLaVA-CoT \cite{xu2024llavacot}, utilize search strategies such as DFS and BFS, including Best-of-N search, sentence-level beam search, and stage-level beam search, to identify optimal reasoning trajectories. These methods typically assess candidate trajectories using scalar metrics ranging from 0.1 to 1.0. However, such explicit evaluation is computationally expensive, as each candidate requires an additional forward pass through a dedicated reward model. To mitigate this computational overhead, our work introduces a learning-based algorithm designed to align the marginal likelihood of generating a rationale directly with its reward. This approach enables efficient probabilistic sampling without explicit reward computations during inference.

\begin{figure}[t]
  \begin{center}
    \includegraphics[width=\textwidth]{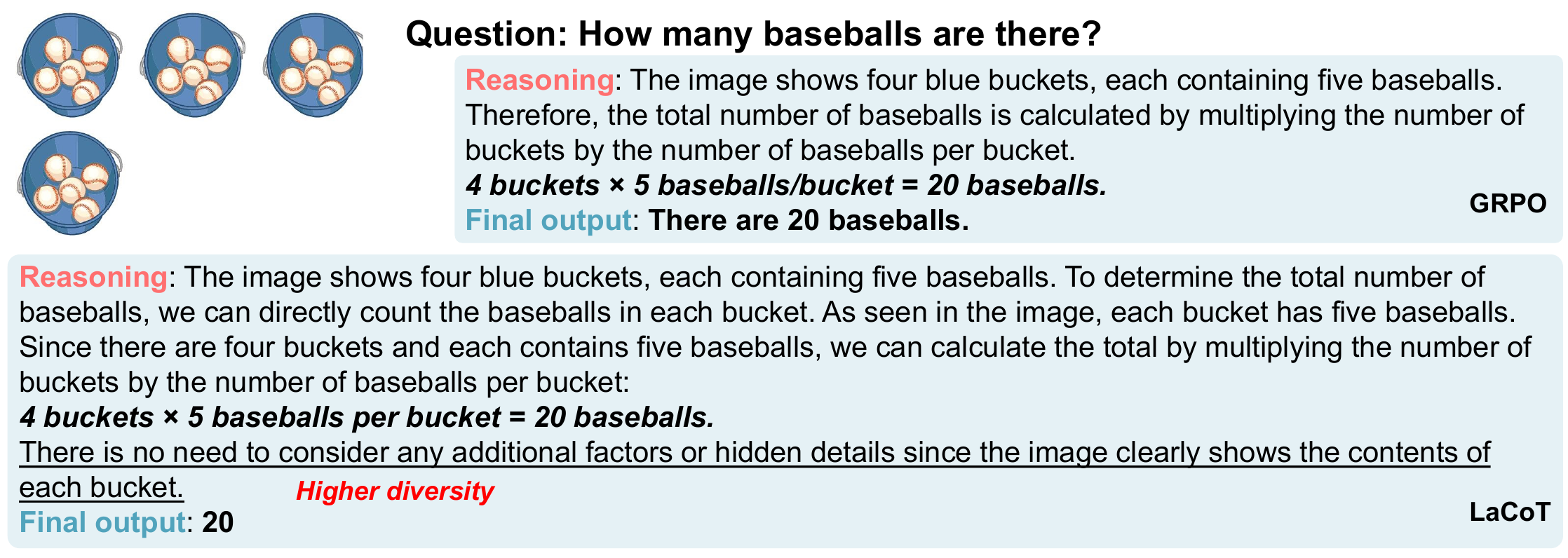}
  \end{center}
  \caption{\label{fig:vis1} Qualitative results of visual reasoning. LaCoT can sample a more diverse and comprehensive reasoning chain than the GRPO model.}
\end{figure}

\section{Conclusion}
In a real-world scenario, solving a fixed visual query with different reasoning chains that lead to the correct answer requires a nuanced understanding of image, context, logic, and flexibility in thought. While querying this knowledge in LVLM involves sampling from intractable posterior distributions. To address this challenge, we propose a novel training algorithm based on amortized variational inference for latent visual chains-of-thought (CoT). Our approach incorporates \textbf{token-level reward approximation} and \textbf{RGFN}, enabling effective and efficient optimization of a policy model to generate diverse and plausible reasoning trajectories, outperforming both supervised fine-tuning and reward-maximization baselines. In addition, we introduce a new inference-time scaling strategy, \textbf{BiN}, which mitigates reward hacking and enhances interpretability with statistically robust selection criteria. Building upon these components, we present \textbf{LaCoT} that leverages a rationale sampler for general-purpose visual reasoning, and an answer generator that is enhanced by high-quality reasoning chains. Given this system, future work should investigate the possibility of applying it for knowledge distillation and synthetic data generation. 

\noindent\textbf{Limitations.} Due to resource constraints, we apply the proposed methods to models up to 7B parameters, but we expect the conclusions to hold for larger models. In fact, our training and inference method can be applied to any autoregressive model, including LLM and LVLM, with various model sizes. As with any on-policy method, exploration in tasks with complex latent remains an open challenge since multiple factors can affect the exploration time, such as sequence length and technical challenges like memory cost. Despite the improved inference performance, this work does not address issues such as hallucination, which are closely related to internal knowledge.

\begin{ack}
This research was supported in part by the DEVCOM Army Research Laboratory under Contract W911QX-21-D-0001, the National Science Foundation under Grant 2502050, and the National Institutes of Health under Award R16GM159146. The content is solely the responsibility of the authors and does not necessarily represent the official views of the funding agencies.
\end{ack}

\bibliographystyle{plain}
\bibliography{myref}

\appendix
\section{Proof of Proposition 1}\label{appen:proof}
% \begin{prop}
%     Let $R(z_{1:t}\top)=logP(Xz_{1:t}Y)$ be a joint-likelihood reward function.\\
%     (a)~If $R(z_{1:-})$ and $R(z_{1:-+\lambda})$ are true reward and the intermediate rewards within region of length $\lambda$ are constantly increment, then we approximate the intermediate reward at step $t+i$ (where $0 \leq i \leq \lambda$) as:
%     \begin{equation}\label{eq:reward}
%     \tilde{R}(z_{1:t+i}\top) = R(z_{1:t}\top) + \frac{i}{\lambda}\left(R(z_{1:t+\lambda}\top) - R(z_{1:t}\top)\right).
%     \end{equation}
    
%     (b)~If $\lambda$ is short enough, the interpolation reward error stays close to 0 and the flow between $F(z_{1:-})$ and $F(z_{1:-+\lambda})$ satisfies 
%     \begin{equation}\label{eq:flow_consis}
%        F(z_i)\prod_{k=i+1}^{j} P_F(z_k \mid z_{k-1})=F(z_j)\prod_{k=i+1}^{j} P_B(z_{k-1} \mid z_k)\,,
%     \end{equation}
% \end{prop}

\begin{proof}
\textbf{(a)}
Assume that within the segment $\{t,t+1,\ldots,t+\lambda\}$ the true reward grows linearly, i.e.

\[
R(z_{1:t+i}\top)\;=\;R(z_{1:t}\top)\;+\;i\,\Delta,
\qquad
\Delta:=\frac{R(z_{1:t+\lambda}\top)-R(z_{1:t}\top)}{\lambda},
\quad 0\le i\le\lambda .
\]
Substituting this expression into Eq.~\eqref{eq:reward} shows
$\tilde{R}(z_{1:t+i}\top)=R(z_{1:t+i}\top)$ for every $i$, so the interpolation incurs \emph{zero} error.

\medskip
\noindent\textbf{(b)}
Suppose $R$ is twice–differentiable along the trajectory and its discrete second derivative is bounded:
\[
\bigl|R(z_{1:s+1}\top)-2R(z_{1:s}\top)+R(z_{1:s-1}\top)\bigr|\;\le\;M,
\qquad\forall s .
\]
The classical linear–interpolation error bound then yields
\begin{equation}\label{eq:bound}
    \bigl|\tilde{R}(z_{1:t+i}\top)-R(z_{1:t+i}\top)\bigr|
\;\le\;
\frac{M}{8}\,i(\lambda-i)
\;\le\;
\frac{M\lambda^{2}}{8},
\qquad 0\le i\le\lambda .
\end{equation}

Thus the approximation error decays as $\mathcal{O}(\lambda^{2})$; choosing $\lambda$ sufficiently small keeps it arbitrarily close to~$0$.

Let
\[
F(z_{s})
\;:=\;
\frac{R(z_{1:s}\top)}
     {q_\theta(\top\mid z_{1:s})},
\qquad
\tilde{F}(z_{s})
\;:=\;
\frac{\tilde{R}(z_{1:s}\top)}
     {q_\theta(\top\mid z_{1:s})},
\qquad
\varepsilon_s:=\tilde{R}(z_{1:s}\top)-R(z_{1:s}\top).
\]
From Eq.~\eqref{eq:bound} we have
$|\varepsilon_s|\le\tfrac{M\lambda^{2}}{8}$, so
\[
\tilde{F}(z_{s})
=
F(z_{s})\Bigl(1+\frac{\varepsilon_s}{R(z_{1:s}\top)}\Bigr).
\]
Denote $R_{\min}:=\min_{s\le t+\lambda}R(z_{1:s}\top)>0$ (positivity follows from likelihoods).
Then
$\bigl|\varepsilon_s/R(z_{1:s}\top)\bigr|\le\frac{M\lambda^{2}}{8R_{\min}}$.

Consider any $i<j\le t+\lambda$.
Applying Eq.~\eqref{eq:flow_consis} to both $F$ and $\tilde{F}$, we obtain
\[
\tilde{F}(z_{i})
\prod_{k=i+1}^{j} P_F(z_k\mid z_{k-1})
=
\tilde{F}(z_{j})
\prod_{k=i+1}^{j} P_B(z_{k-1}\mid z_{k})
\Bigl(1+\mathcal{O}(\lambda^{2})\Bigr),
\]
where the $\mathcal{O}(\lambda^{2})$ term accumulates at most $(j-i)$ relative perturbations bounded by $M\lambda^{2}/(8R_{\min})$.
Hence, the forward and backward product flows still match up to a multiplicative factor that vanishes quadratically as $\lambda\!\to\!0$.
Therefore, the interpolated rewards preserve flow consistency to arbitrary precision for sufficiently small segment lengths.
\end{proof}

\begin{figure}[t]
\renewcommand{\thefigure}{F\arabic{figure}}
  \begin{center}
    \includegraphics[width=\textwidth]{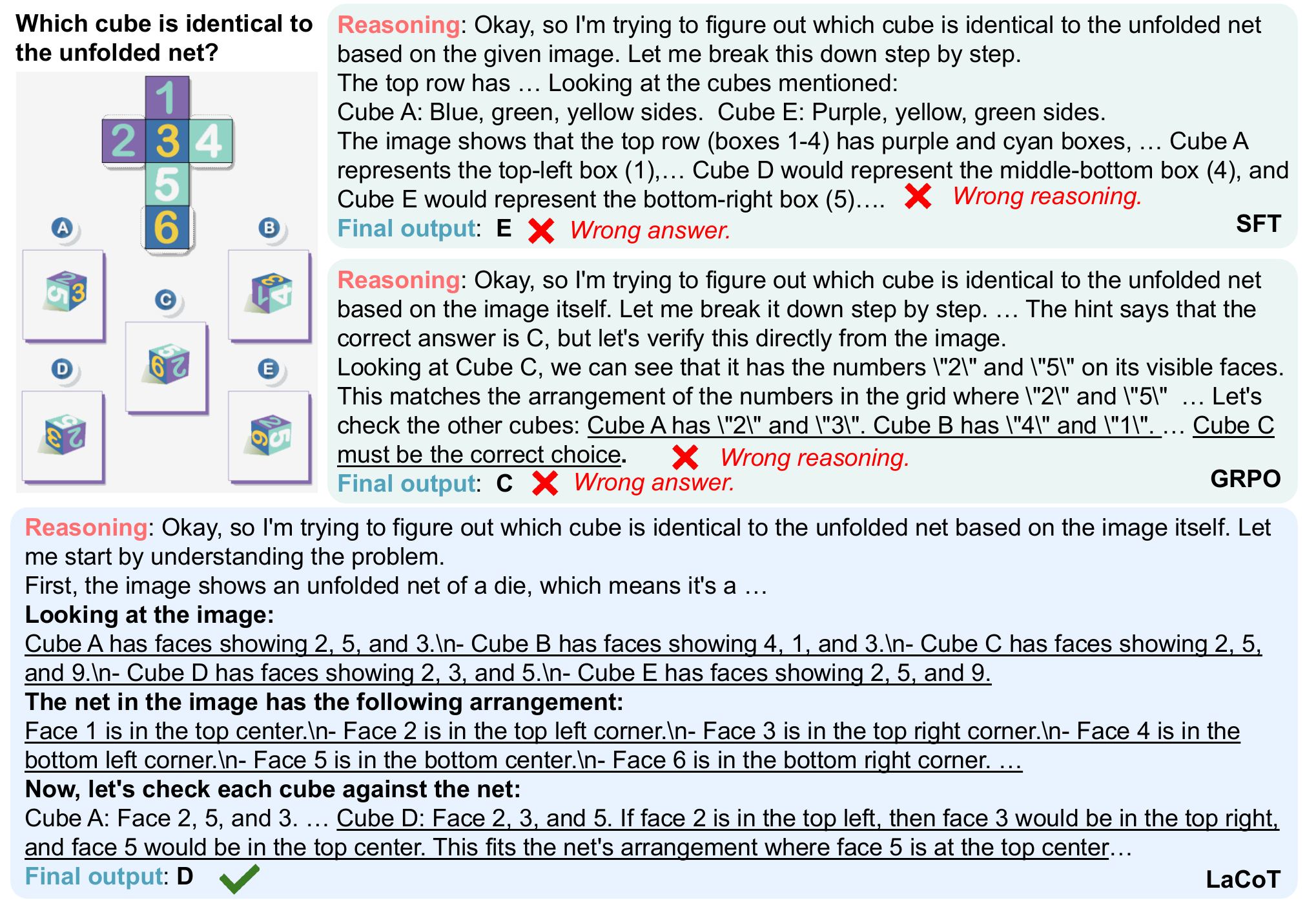}
  \end{center}
  \caption{\label{fig:vis2} Qualitative results of visual reasoning. We \textbf{highlight} the important reasoning steps.}
\end{figure}

\begin{figure}[t]
\renewcommand{\thefigure}{F\arabic{figure}}
  \begin{center}
    \includegraphics[width=\textwidth]{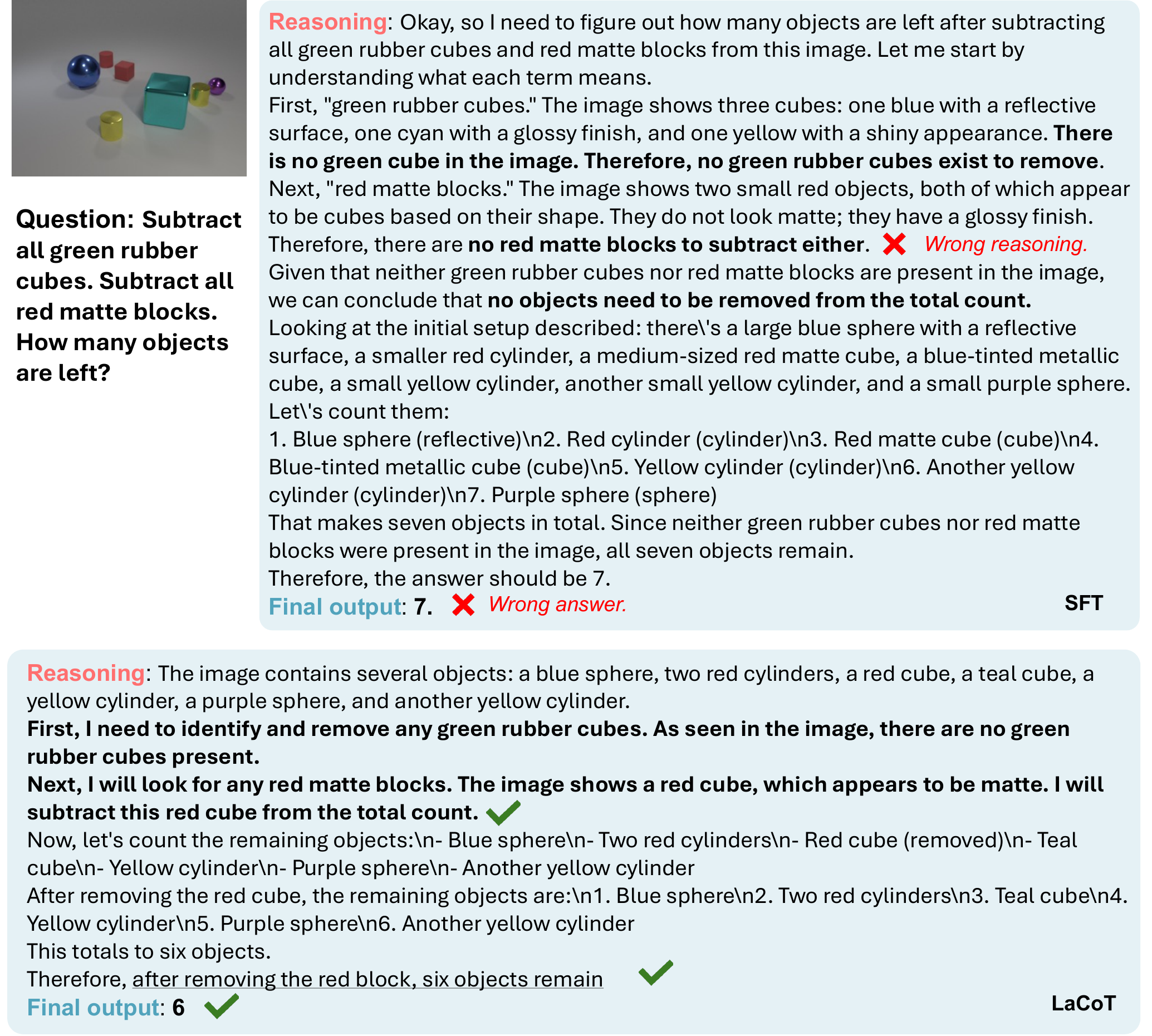}
  \end{center}
  \caption{\label{fig:vis3} Qualitative results of visual reasoning. We \textbf{highlight} the important reasoning steps. }
\end{figure}

\begin{figure}[t]
\renewcommand{\thefigure}{F\arabic{figure}}
  \begin{center}
    \includegraphics[width=\textwidth]{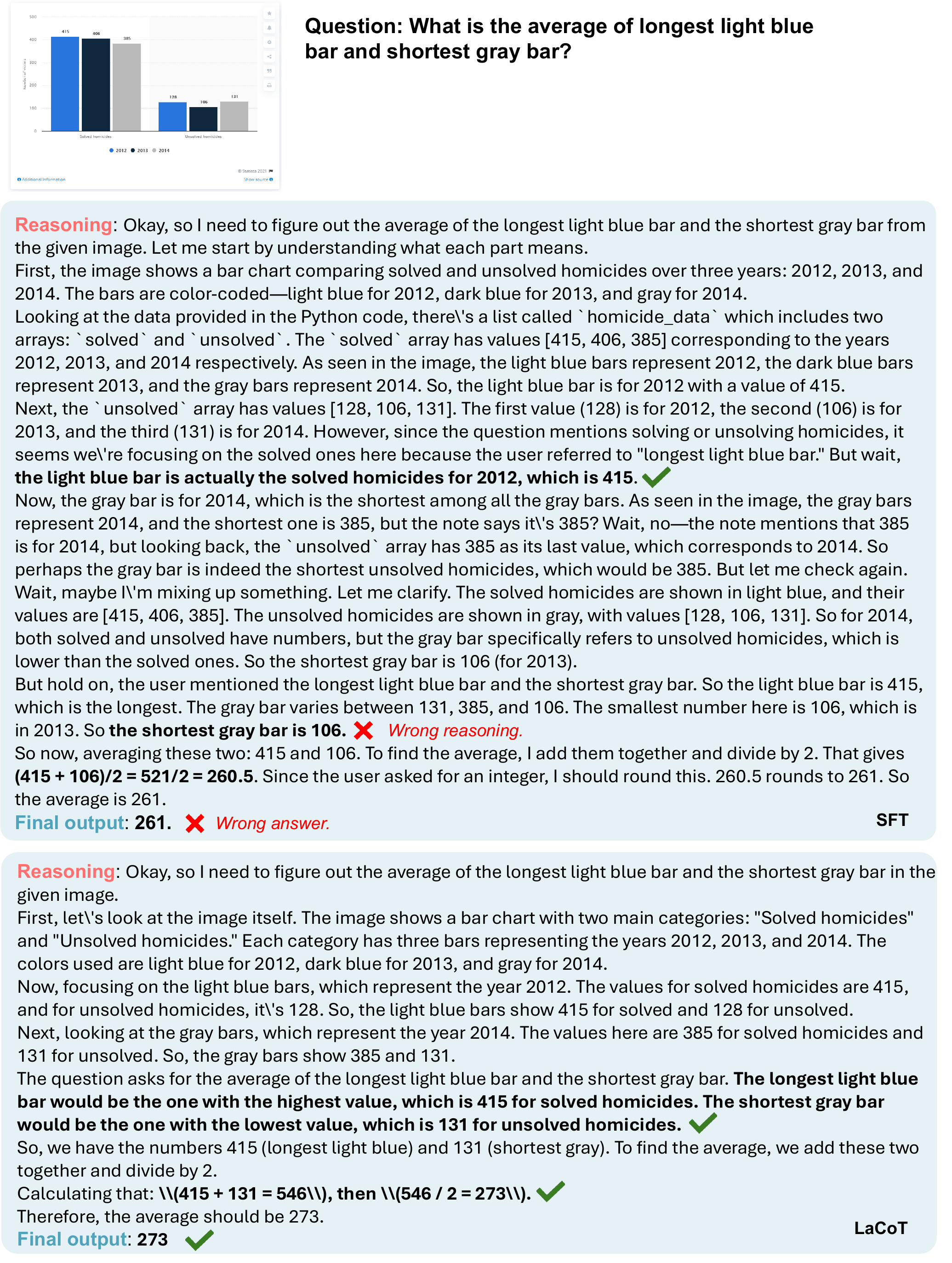}
  \end{center}
  \caption{\label{fig:vis4} Qualitative results of visual reasoning. We \textbf{highlight} the important reasoning steps.}
\end{figure}

\section{Experiments}
\subsection{Qualitative results\label{appen:qualitative}}
In Fig.~\ref{fig:vis2}, we provide qualitative results of a comparison between Qwen2.5-VL-7B (SFT), Qwen2.5-VL-7B (GRPO), and LaCoT-Qwen-7B. As can be seen, LaCoT-Qwen-7B can provide a more accurate reasoning chain, leading to the correct answer. Meanwhile, due to limited generalizability, SFT and GRPO samples show the wrong visual CoT. In Fig.~\ref{fig:vis3} and Fig.~\ref{fig:vis4}, our LaCoT model can sample more straightforward and accurate reasoning chains, demonstrating the effectiveness and robustness of the proposed training and inference algorithm.
\subsection{Study of interpolation reward\label{appen:stdlambda}}
In Table~\ref{tab:stdoflmbda}, we study the impact of the interpolation reward with different skipped steps (i.e., $\lambda$) in the reward approximation process of the policy optimization. As mentioned in Proposition~\ref{prop:1}, a smaller $\lambda$ theoretically leads to more fine-grained reward supervision but longer training time. 
\begin{table}[h]
\renewcommand\thetable{T\arabic{table}}
\centering
\caption{\label{tab:stdoflmbda} Study the impact of the interpolation reward with different skipping steps (i.e., $\lambda$) to the policy model. } 
\setlength{\tabcolsep}{3.5pt}
\scalebox{0.95}{\begin{tabular}{lc|ccccc}
\toprule
\multirow{2}{*}{Method} & $\lambda$ &MathVista & MathVision & MathVerse & MMMU & Overall \\
 && mini & full & vision-only & val & \emph{Avg.} \\\midrule
 Qwen2.5-VL-7B &-& 63.7 & \textbf{25.4} & \underline{38.2} & 50.0 & \underline{44.3} \\ \midrule
LaCoT-Qwen-7B & 32&64.9 &23.0&42.5&51.9&45.6\\
\rowcolor{cGrey}LaCoT-Qwen-7B &8& \textbf{68.4} &\underline{24.9}  &\textbf{39.7}  & \textbf{54.9} & \textbf{47.0} \\ 
% LLaVA-OV-0.5B$^\dagger$ & \textbf{34.8} & 6.3 & 17.6 & \textbf{31.4} & 22.5 \\
% \rowcolor{cGrey}LaCoT-LLaVA-0.5B & 33.7 & \textbf{11.4} & \textbf{19.5} & 30.4 & \textbf{23.8} \\
 \bottomrule
\end{tabular}}
\end{table}

% \begin{table}[h]
% \centering
% \caption{\label{tab:mainresult} Study the impact latent CoT length (L). } 
% \setlength{\tabcolsep}{3.5pt}
% \scalebox{0.95}{\begin{tabular}{lc|ccccc}
% \toprule
% \multirow{2}{*}{Method} &L &MathVista & MathVision & MathVerse & MMMU & Overall \\
%  &&mini & full & vision-only & val & \emph{Avg.} \\\midrule
%  Qwen2.5-VL-7B &-& 63.7 & \textbf{25.4} & \underline{38.2} & 50.0 & \underline{44.3} \\ \midrule
% LaCoT-Qwen-7B & 700&\textbf{68.4} &\underline{24.9}  &39.7  & \textbf{54.9} & \textbf{47.0}\\
% \rowcolor{cGrey}LaCoT-Qwen-7B & 1024&  &&43.3&\\ 
% % LLaVA-OV-0.5B$^\dagger$ & \textbf{34.8} & 6.3 & 17.6 & \textbf{31.4} & 22.5 \\
% % \rowcolor{cGrey}LaCoT-LLaVA-0.5B & 33.7 & \textbf{11.4} & \textbf{19.5} & 30.4 & \textbf{23.8} \\
%  \bottomrule
% \end{tabular}}
% \end{table}
\begin{table}[h]
\renewcommand\thetable{T\arabic{table}}
\centering
\caption{\label{tab:inferenceTime} Inference time study of reasoning model with multiple rational sampling.}
\begin{tabular}{lccccc}
\toprule
$\#$Rationals (N) & 1   & 5    & 10   & MathVista & MathVerse \\ \midrule
LLaVA-CoT-11B   & -   & 340s & 830s & 52.5      & 22.6      \\
R1-OneVision-7B & 32s & -    & -    & 64.1      & 37.8      \\
LaCoT-Qwen-7B   & -   & 30s  & 65s  & 68.4      & 39.7     \\ \bottomrule
\end{tabular}
\end{table}

\subsection{Efficiency analysis}
Give superior performance gain by sampling multiple rationales at inference time, but this process introduces additional inference cost, and we address this by using mini-batching (with batch size k=5) to generate N rationales in N/k forward passes. In Table~\ref{tab:inferenceTime}, we report the average per-sample inference time (reasoning + answering) and corresponding performance of different reasoning-LVLM on MathVista and MathVerse. As can be seen, LaCoT-Qwen-7B consistently achieves stronger performance, even with modest increases in inference time. Compared to other multi-rationale baselines, LaCoT strikes a favorable balance between computational cost and reasoning reliability, thereby improving both the trustworthiness of rationales and the accuracy of final answers.

\subsection{Experiments compute resources}
This work utilizes an 8*80GB GPU-node for training. We set the Deepspeed Zero-3 stage and gradient-checkpointing to reduce memory costs during optimization. It takes around 30 hours for supervised fine-tuning on 250k reasoning data samples, and 120 hours for GRPO and RGFN fine-tuning on 3k data samples. 

\subsection{Hyperparameter\label{sec:hyperpmtr}}
We detail the hyperparameters used for training the reward model and LaCoT in our experiments in Table~\ref{tab:hyperp}. During LaCoT training, we randomly sample (mini-batch size) $Z$s for every ($X,Y$) as exploration. 

\begin{table}[t]
\centering
\renewcommand\thetable{T\arabic{table}}
\caption{\label{tab:hyperp} Hyperparameters for training.}
\begin{tabular}{ll}
\toprule
LoRA dropout & 0.05 \\ 
Batch size (SFT) & 2 \\
Batch size (RGFN) & 1 \\
Gradient accumulation (SFT) & 16 \\
Learning rate & 0.00001 \\
Optimizer & AdamW \\
Weight decay & 0.05 \\
Temperature max & 1.0 \\
Temperature min & 0.5 \\
Reward temperature start & 1.0 \\
Reward temperature end & 0.7 \\
Reward temperature horizon & 50 \\
exploration number & 6 \\
$\lambda$ & 8 \\
$\tau_{max}$& 1.5 \\
$\tau_{min}$& 1.0 \\
Maximum rationale length & 700 \\
Minimum rationale length & 64 \\ \bottomrule
\end{tabular}
\end{table}

\end{document}